\documentclass[11pt]{article}
\usepackage[margin=1in]{geometry}
\pdfoutput=1
\usepackage{ifthen}
\newboolean{usenatbib}
 
\setboolean{usenatbib}{false}
 
\usepackage[style=alphabetic,maxalphanames=4,minalphanames=3,maxcitenames=2,maxbibnames=99,giveninits=false,doi=false,url=false,natbib]{biblatex}
\DeclareDelimFormat[bib]{nametitledelim}{\addspace}
 
\usepackage{bm}
\usepackage{fullpage}
\usepackage[english]{babel}
\usepackage[utf8]{inputenc}
\usepackage[T1]{fontenc}
\usepackage{ae} \usepackage{aecompl}
\usepackage{enumitem}
\setlist{nosep}
\usepackage{microtype}
\usepackage{booktabs}
\usepackage{hhline}
\usepackage{makecell}
\usepackage{etoolbox}
\apptocmd{\sloppy}{\hbadness 10000\relax}{}{}
 
\usepackage{amsmath,amsthm,amssymb,mathtools,thmtools}
\usepackage{graphicx}
\usepackage{color}
\usepackage[dvipsnames]{xcolor}
\usepackage[textsize=scriptsize,disable]{todonotes}
\usepackage[colorlinks=true, allcolors=magneta]{hyperref}
\usepackage[algo2e,ruled]{algorithm2e}
\usepackage{xfrac}
\usepackage[normalem]{ulem}

\hypersetup{colorlinks,
            breaklinks=true,
            linkcolor=purple,
            citecolor=purple,
            urlcolor=magenta,
            linktocpage,
            plainpages=false,
            bookmarks=false}
 
\usepackage{macros}
 
\usepackage{comment}
\usepackage{tabularx}
\usepackage{subfig}
\usepackage{array}

\title{Robust Online Learning}
\usepackage{xcolor}
\usepackage{tikz}
\usetikzlibrary{trees}

\author{%
 Sajad Ashkezari\thanks{University of Waterloo, \texttt{sajad.ashkezari@uwaterloo.ca}.}
}


\begin{document}

\maketitle

\begin{abstract}%
  We study the problem of learning robust classifiers where the classifier will receive a perturbed input. Unlike robust PAC learning studied in prior work, here the clean data and its label are also adversarially chosen. We formulate this setting as an online learning problem and consider both the realizable and agnostic learnability of hypothesis classes. We define a new dimension of classes and show it controls the mistake bounds in the realizable setting and the regret bounds in the agnostic setting. In contrast to the dimension that characterizes learnability in the PAC setting, our dimension is rather simple and resembles the Littlestone dimension. We generalize our dimension to multiclass hypothesis classes and prove similar results in the realizable case. Finally, we study the case where the learner does not know the set of allowed perturbations for each point and only has some prior on them.
\end{abstract}


\section{Introduction}
In this paper we study the online learning of robust predictors, i.e., predictors whose prediction remains correct even if an adversary perturbs their input. It has been shown that even predictors that have high accuracy on clean data can drastically fail on slightly perturbed inputs, which could be indistinguishable from the clean input for humans \citep{goodfellow2014explaining}. Learning robust neural networks has been widely studied and is an ongoing research direction \citep{chakraborty2021survey}.

Our goal is to formulate and study this problem through the theoretic online learning framework \citep{littlestone1988learning}. Informally, we consider the following interactive learning game between an adversary and a learner. At each round the adversary reveals a perturbed input to the learner who will then predict a label on this input. The adversary then reveals the clean input and its true label. The goal of the learner is to minimize the number of mistakes it makes. We formally define this problem in Definition~\ref{def: robust_online_learning}. We will then define a notion of robust online learnability of a class and answer the following questions:

\begin{itemize}
    \item What is the optimal number of mistakes achievable in the robust online learning problem?
    \item Is there a combinatorial measure of complexity of a class that controls this optimal value?
\end{itemize}

\subsection*{Related work}

Robust learnability has been extensively studied in the learning theory literature. However, most prior work focus on robust PAC learnability where unlike our setup, the clean data comes from a distribution and is then manipulated. It has been shown that VC classes are robustly PAC learnable, but finite VC is not a necessary condition \citep{pmlr-v99-montasser19a}. It was later shown that the learnability is characterized by a new dimension that depends on the \textit{global one-inclusion graph} of a hypothesis class \citep{MontasserHS22}. Other works have also studied robust PAC learnability when the perturbation set that the adversary is allowed to perturb a point into is unknown to the learner or belongs to a class of such sets \citep{montasser2021adversarially, lechner2023adversarially}. Adversarially robust PAC regression has been studied by \citet{pmlr-v202-attias23a}.

The online learning framework was introduced in the seminal work of Littlestone \cite{littlestone1988learning}, which was later extended to agnostic online learning as well \cite{Ben-DavidPS09}. This framework has also been studied for multiclass hypothesis classes \cite{daniely2015multiclass, hanneke2023multiclass}. To the best of our knowledge, this is the first work to study robust online learning in this framework.

\subsection*{Contributions}
We make the following contributions:

\begin{itemize}
    \item We formulate the robust online learning problem through a well-known theoretical framework.
    \item We define a new dimension of a binary class $\H$, $\ldimu(\H)$, and show that it characterizes robust online learnability. We show  the optimal mistake bound in the realizable case is exactly equal to this dimension. In the agnostic case, we show the optimal expected regret up to logarithmic factors is $\tilde{\mathcal{O}}(\sqrt{\ldimu(\H)T)}$.
    \item We extend our dimension for multiclass hypothesis classes and show similar results for realizable robust online learning.
    \item We also study the setting where the learning algorithm does not exactly know the perturbations that the adversary is allowed to make, but has prior knowledge that it belongs to a finite family $\mathcal{G}$ of perturbation functions. We prove upper bounds for this setting that depend logarithmically on the cardinality of $\mathcal{G}$.
\end{itemize}

\section{Setup and Problem Formulation}
We use $\X$ and $\Y$ to denote our instance space and label space, respectively. Here, we focus on binary label space, i.e., $\Y=\{0, 1\}$. A hypothesis is a mapping $h:\X\to\Y$ from instance space to label space. A hypothesis class, $\H \subseteq \Y^\X$, is a set of such mappings. A learning algorithm $\A: (\X \times \Y)^* \to \Y^\X$ gets a finite number of instance-label pairs and outputs a hypothesis. We let $\U:\X \to 2^\X$ be any function that maps instances to the set of allowed perturbations. That is, for each $x\in\X$, the adversary can perturb it only to $z\in \U(x)$. Unless stated otherwise, we assume $\U$ is fixed and known to the learner (we study uncertain perturbation sets in Section~\ref{sec: uncertain perturbations}). The adversarial loss of a hypothesis $h$ on $(x,y)$ is defined as follows:

\begin{equation}
    l_\U(h, (x,y)) = \sup_{z \in \U(x)}\indict[h(z) \neq y],
\end{equation}

where $\indict[A]=1$ if $A$ is true, and $\indict[A]=0$ otherwise.

We are now ready to define our learning problem.

\begin{definition}[Robust Online Learning]
    \label{def: robust_online_learning}
    Robust online learning is an iterative game between an adversary and a learner such that:
    
    At each round $t=1,2,\cdots$:
    \begin{enumerate}
        \item[] 
        \begin{enumerate}
            \item The adversary selects $Z_t$ and reveals it to the learner.
            \item The learner predicts $\yhat_t \in \Y$.
            \item The adversary selects $X_t\in\X$ with $Z_t\in\U(X_t)$ and $Y_t \in \Y$ and reveals $(X_t, Y_t)$ to the learner.
            \item The learner incurs a loss of $\indict[\yhat_t \neq Y_t]$.
        \end{enumerate}
    \end{enumerate}
\end{definition}

\begin{remark}
    In Definition \ref{def: robust_online_learning}, we assume the adversary knows how the learner will predict on each point and the goal of the adversary is to maximize the number of mistakes. So we can equivalently reformulate the problem such that at each round the learner first picks a hypothesis $h_t:\X \to \Y$ and then the adversary picks and reveals $(\X_t, \Y_t)$. Finally, the learner incurs a loss of $l_\U(h, (X_t,Y_t))$.
\end{remark}

Consider a sequence $Z_1, X_1,Y_1, \cdots, Z_T, X_T, Y_T$ for $T \leq \infty$. For any finite $t \leq T$, define $S_{<t}=Z_1,X_1, Y_1, \cdots, Z_{t-1},X_{t-1}, Y_{t-1}$. We respectively define the mistake bound and the regret of a learner $\A$ with respect to $\H$ on a sequence $S$ as follows:

\begin{equation}
    \mistake(\A, S) = \sum_{t=1}^{T} \indict[\A(S_{<t})(Z_t) \neq Y_t]
\end{equation}

\begin{equation}
            \label{def: reg}
            R_\H(\A, S) = \sum_{t=1}^T\indict[\A(S_{<t})(Z_t)\neq Y_t] - \min_{h\in \H} \sum_{t=1}^T l_\U(h, (X_t,Y_t)) 
\end{equation}

Similar to the classic online learning, we can define robust online learning with respect to a hypothesis class. We say a sequence $X_1,Y_1,\cdots,X_T, Y_T$ is $\U$-robust realizable with respect to $\H$ if $\inf_{h \in \H} \sum_{t=1}^{T} l_\U(h, (X_t,Y_t)) = 0$. Let $\realizableset(\H)$ denote the set of all realizable sequences w.r.t. $\H$.

\begin{definition}[Realizable Robust Online Learnability]
    We say that $\H$ is realizable robust online learnable with optimal mistake bound $\optmistake < \infty$, if the following holds:
    \begin{equation}
        \inf_{\A} \sup_{S\in \realizableset(\H)} \mistake(\A, S) = \optmistake
    \end{equation}
\end{definition}

We can also define a learnability task without the realizability assumption, which is known as agnostic learning. However, instead of bounding the number of mistakes, we are interested in bounds on the regret. Furthermore, in this setup, we assume the number of rounds is some finite $T$ known to the learner in advance.

\begin{definition}[Agnostic Robust Online Learnability]
    We say that $\H$ is agnostic robust online learnable with optimal regret $\optregret_T < \infty$ for any $T < \infty$ if the following holds:
    \begin{equation}
        \inf_{\A} \sup_{S:|S|=T} \regret_\H(\A, S) = \optregret_T
    \end{equation}
\end{definition}

\section{Optimal mistake bounds for realizable robust online learning}
\label{sec: realizable binary}
In this section we derive optimal mistake bounds for realizable robust online learning of a class $\H$ with respect to a combinatorial parameter of it. To do this, we first introduce an easier version of the robust online learning problem, which makes the definition of our dimension more intuitive.

\subsection{Orientation Game}
For any given $Z_t$, there could be many candidates $x\in \X$ such that $Z_t\in \U(x)$. So the decision of the learner will potentially need to depend on all these points and how the class behaves on them. What if we are given only two candidates $X_t^0$ and $X_t^1$ such that $Z_t \in X_t^0 \cap X_t^1$ and only need to make a decision between these two? We will show that unlike our original problem, finding a learner for this problem can be simpler. We first formalize this problem.

\begin{definition}[Orientation Game]
    \label{def: orientation game}
    Orientation game is an iterative game between an adversary and a learner such that:
    
    At each round $t=1,2,\cdots$:
    \begin{enumerate}
        \item[] 
        \begin{enumerate}
            \item The adversary selects $X_t^0, X_t^1 \in \X$ with $\U(X_t^0) \cap \U(X_t^1) \neq \emptyset$ and reveals them to the learner.
            \item The learner predicts $\yhat_t \in \{0,1\}$.
            \item The adversary selects $Y_t \in \Y$ and reveals $(X_t^{Y_t}, Y_t)$ to the learner.
            \item The learner incurs a loss of $\indict[\yhat_t \neq Y_t]$.
        \end{enumerate}
    \end{enumerate}
\end{definition}

We say that a sequence $S=(X_1^0,X_1^1), Y_1, (X_2^0,X_2^1), Y_2, \cdots, (X_T^0,X_T^1), Y_T$ is realizable by $\H$ if the sequence $X_1^{Y_1},Y_1, \cdots, X_T^{Y_T}, Y_T$ is realizable by $\H$. We also define $\X^2_\U:= \{(x_1,x_2)\in \X^2: \U(x_1) \cap \U(x_2) \neq \emptyset\}$. With these definitions, we are now ready to define our dimension.

\begin{definition}[$\U$-adversarial Littlestone tree] A $\U$-adversarial Littlestone tree is a full binary tree of depth $d\in \mathbb{N}$ whose internal nodes are labeled by $\X_\U^2$ and the two outgoing edges from each node are labeled by $0$ and $1$.
\end{definition}

In other words, the tree can be represented as the following collection:
$$\{(x^0_\mathbf{u}, x^1_\mathbf{u})\in \X_\U^2:\mathbf{u}\in\{0,1\}^k, 0\leq k < d\}$$

We say that a tree is shattered by $\H$ if each path emanating from the root is realizable by $\H$. That is, for each $\mathbf{u}=(u_1,\cdots, u_d)\in \{0,1\}^d$ and for each $0 \leq k < d$, there exists $h\in\H$ such that for all $0 \leq i \leq k$, $h(z)=u_{i+1}$ for all $z\in \U(x_{\mathbf{u}_{\leq i}}^{u_{i+1}})$, i.e., $l_\U(h, (x_{\mathbf{u}_{\leq i}}^{u_{i+1}}, u_{i+1}))=0$, where $\mathbf{u}_{\leq i}=(u_1,\cdots,u_i)$. Figure~\ref{fig:tree-example} illustrates an example of such a tree of depth 2.

\begin{definition}[$\U$-adversarial Littlestone Dimension]
    The $\U$-adversarial Littlestone dimension of a class $\H$ is the maximum depth of a tree that it shatters. We denote this dimension by $\ldimu(\H)$. Furthermore, we say $\ldimu(\H) = \infty$ if $\H$ shatters trees of arbitrary large depth. 
\end{definition}

\begin{figure}[t]
    \centering
    \begin{tikzpicture}[
        node/.style={circle, draw=black, thick, inner sep=2pt},
        edge/.style={draw=black, thick},
        level 1/.style={sibling distance=45mm},
        level 2/.style={sibling distance=30mm},
        level distance=15mm
    ]

    \node[node] (root) {$x_{\emptyset}^0,x_{\emptyset}^1$}
        child { node[node] (x0) {$x_{0}^0, x_0^1$}
            child { node[node] (x00) {$x_{00}^0,x_{00}^1$}
                edge from parent[edge] node[left] {0}
            }
            child { node[node] (x01) {$x_{01}^0, x_{01}^1$}
                edge from parent[edge] node[right] {1}
            }
            edge from parent[edge] node[left] {0}
        }
        child { node[node] (x1) {$x_{1}^0,x_1^1$}
            child { node[node] (x10) {$x_{10}^0, x_{01}^1$}
                edge from parent[edge] node[left] {0}
            }
            child { node[node] (x11) {$x_{11}^0, x_{11}^1$}
                edge from parent[edge] node[right] {1}
            }
            edge from parent[edge] node[right] {1}
        };

    \draw[very thick, red] (root) -- (x0);
    \draw[very thick, red] (x0) -- (x01);


    \end{tikzpicture}
    \caption{A $\mathcal{U}-$adversarial tree of depth 2. For each $k\leq 2$ and $\mathbf{u}\in \{0,1\}^k$ we have $\mathcal{U}(x_\mathbf{u}^0)\cap \mathcal{U}(x_\mathbf{u}^1)\neq \emptyset$. The tree is shattered by $\H$ if each of its root-to-leaf paths are realizable by $\H$. For example, there must exist $\textcolor{red}{h_{01}}\in \H$ such that $\textcolor{red}{h_{01}}(z)=\textcolor{blue}{0}$ for all $z\in \U(x_{\emptyset}^{\textcolor{blue}{0}})$ and $\textcolor{red}{h_{01}}(z)=\textcolor{purple}{1}$ for all $z\in \U(x_{0}^{\textcolor{purple}{1}})$.}
    \label{fig:tree-example}
\end{figure}
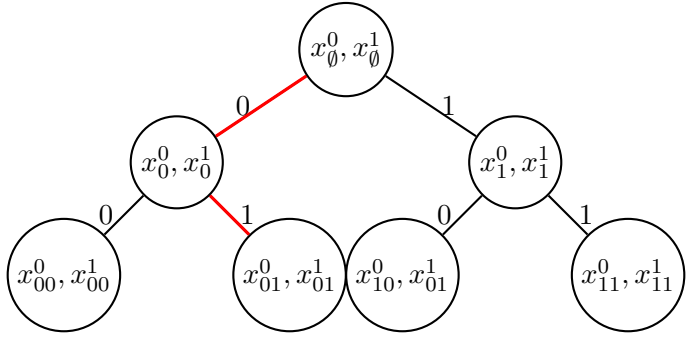

\begin{remark}
    A quick sanity check would be to see if our definition coincides with the classic Littlestone dimension when the adversary is not allowed to perturb the instances, i.e., $\U(x) = \{x\}$ for all $x\in\X$. In this case, $\X_\U^2 = \{(x,x):x\in \X\}$ as $\U(x_1) \cap \U(x_2) = \emptyset$ for all $x_1\neq x_2$. Thus, we can simply denote each node by an instance of $\X$ and the definition becomes the same as the classic definition.
\end{remark}

\begin{algorithm}
\caption{Standard Optimal Algorithm for Orientation Game ($\mathrm{SOA_{OG}}$)}
\label{alg: orientation learner}
\KwIn{Hypothesis class $\H\subseteq \Y^\X$, perturbation set $\U$}

Initialize $\mathcal{V}_0 \gets \H$\;

\For{$t \geq 1$}{
    Receive $(X_t^0, X_t^1)$\;
    
    \For{$y \in \Y$}{Define $\mathcal{V}_t^y = \{h\in \mathcal{V}_{t-1}:\forall z\in \U(X_t^y), h(z)=y\}$}
    
    Predict $\yhat_t$ for some= $\yhat_t \in \argmax_{y\in \Y} \ldimu(\mathcal{V}_t^y)$\;

    Receive $Y_t$\;

    Let $\mathcal{V}_{t}=\mathcal{V}_t^{Y_t}$
}
\end{algorithm}

Similar to the realizable robust online learning, we can also define realizable version of the orientation game and the optimal mistake. We denote the optimal mistake bound of this problem with $\optmistake_{OG}$. Then we get the following results on the optimal mistake. 

\begin{theorem}
    \label{thm: optimal mistake orientation game}
    A hypothesis class $\H$ with finite $\ldimu(\H)$ is realizable learnable in the orientation game with optimal mistake bound $\optmistake_{OG} = \ldimu(\H)$. In particular, the algorithm $\mathrm{SOA_{OG}}$ achieves the optimal mistake bound by ensuring that each mistake reduces the $L_\U$ dimension of the version space by at least one.
    
    \begin{proof} We first show that $\optmistake_{OG} \geq L:= \ldimu(\H)$. Fix any learning algorithm. The adversary plays according to a shattered tree of depth $L$, $\{(x^0_\mathbf{u}, x^1_\mathbf{u})\in \X_\U^2:\mathbf{u}\in\{0,1\}^k, 0\leq k < L\}$. It starts by picking the root of the tree, $(x_\emptyset^0,x_\emptyset^1)$, and for any prediction $\hat{y}$ of the algorithm, it chooses it's label to be $1-\hat{y}$ and then continue the process for the child in the tree connect to the current root and continuing the same process. By the definition of shattering, the adversary can continue this process for at least $L$ steps while maintaining realizability. Since we can force $L$ mistake on any learner, $\optmistake_{OG} \geq L$.
    
    We now show $\mathrm{SOA_{OG}}$ makes at most $L$ mistakes on any realizable sequence, which implies $\optmistake_{OG} \leq L$, and thus, $\optmistake_{OG} = L$. Consider any round $t$ at which the algorithm makes a mistake. We claim $\ldimu(\mathcal{V}_{t+1}) < \ldimu(\mathcal{V}_t) =: L_t$. Assume otherwise. Then it must be that $\ldimu(\mathcal{V}_{t+1}) = \ldimu(\mathcal{V}_t)$ as $\mathcal{V}_{t+1} \subseteq \mathcal{V}_{t}$ and thus $\ldimu(\mathcal{V}_{t+1}) \leq \ldimu(\mathcal{V}_t)$. By definition of $\yhat_t$, $\ldimu(\mathcal{V}_{t}) \geq \ldimu(\mathcal{V}_t^{\yhat_t}) \geq \ldimu(\mathcal{V}_t^{Y_t})=\ldimu(\mathcal{V}_{t+1})=\ldimu(\mathcal{V}_{t})$. Let $T_0$ and $T_1$ be trees of depth $L_t$ shattered by $\mathcal{V}_t^{0}$ and $\mathcal{V}_t^{1}$, respectively. Create a new tree whose root is $(X_t^0, X_t^1)$ and whose left and right subtrees are $T_0$ and $T_1$, respectively. They by definition of $\mathcal{V}_t^y$, this tree is shattered by $\mathcal{V}_t$, and thus $\ldimu(\mathcal{V}_t) \geq L_t + 1$ which is a contradiction and thus the claim holds. Thus, each time the algorithm makes a mistake, the dimension of $\mathcal{V}_t$ decreases by at least 1, since the dimension is nonnegative, the algorithm makes at most $L$ mistakes.
    \end{proof}
\end{theorem}

\subsection{From orientations to learners}

Here, we show how we can convert a learner in the orientation game to a learner for the robust online learning problem. Suppose the learner receives an input $z$ and wants to predict which 
$x$ with $z\in\U(x)$ the adversary will choose. For a candidate $x^y$ representing label $y$, the learner checks the decision of the orientation between this candidate and each candidate representing the opposite label. If the orientation is always towards $x^y$, then the learner predicts $y$. The learner does this process for all $y$ and all $x^y$. If the learner makes a mistake on $X_t, Y_t$ at some round $t$, then it must be that the orientation between $X_t$ and some candidate from the other label was wrong. Thus, we can upper bound the number of mistakes made in the online learning problem by the number of mistakes made in the orientation game, which we can upper bound by the results in the previous section. We present our learner in Algorithm \ref{alg: online learner} where we define $\mathcal{V}^\U_{x,y}:= \{h:\in \mathcal{V}: \forall z\in \U(x), h(z)=y\}$ be the set of hypotheses in $\mathcal{V}$ consistent on $(x,y)$. We note that this idea has also been used by \citet{MontasserHS22} for converting orientations of their global one-inclusion graph to PAC learners.

\begin{algorithm}[t]
\caption{Robust Online Learning Strategy $\yhat_t$}
\label{alg: online learner}
\KwIn{Hypothesis class $\H\subseteq \Y^\X$, perturbation set $\U$, orientation learner $\eta:(\X^2_\U \times \Y)^* \times \X^2\to \{0,1\}$, test point $z$}

Initialize $\mathcal{V} \gets \H$, $f(.,.) \gets \eta((.,.))$, $\tau \gets 1$\;

\For{$t \geq 1$}{
    Receive $Z_t$
    
    \For{$y \in \Y$}{
    Define $P_y := \{x\in\X : Z_t\in \U(x) \land \mathcal{V}^\U_{x,y} \neq \emptyset\}$\;
    }
    \If{$\exists y\in \Y, \exists x_y\in P_y, \forall x_{1-y}\in P_{1-y}: f(x_0,x_1)=y$}{
    $\yhat_t \gets y$\;
    }

    \Else{
    $\yhat_t \gets 1$\;
    }
    
    Receive $(X_t, Y_t)$\;
    
    \If{$\yhat_t \neq Y_t$}{
        Let $X_\tau^{Y_t} \gets X_t$\;
        
        Let $X_\tau^{1-Y_t}$ be an arbitrary $x\in P_{1-Y_t}$ such that $f(X_\tau^0, X_\tau^1) = 1-Y_t$\;
        
        Let $\xi_\tau \gets ((X_\tau^0, X_\tau^1), Y_t$)\;
        
        Update $f(.,.) \gets \eta(\xi_1, \dots, \xi_\tau, (.,.))$\;
        
        Update $\tau \gets \tau + 1$\;
    }
    $\mathcal{V} \gets \mathcal{V}_{X_t, Y_t}^\U$\;
}
\end{algorithm}

We are now ready to state our main result in this section.

\begin{theorem}
\label{thm: realizable optimal mistake bound}
    For a hypothesis class $\H$ with $\ldimu(\H)=L < \infty$, the optimal mistake bound in the realizable robust online learning satisfies $\optmistake = L$.
    \begin{proof}
        We prove the theorem first by showing $\optmistake \geq L$ and then $\optmistake \leq L$.

        To prove the lower bound consider the following strategy by the adversary. Pick a tree of depth $L$ that is shattered by the class. Start from the root $(X_\emptyset^0, X_\emptyset^1)$ and pick any $Z_1 \in X_\emptyset^0 \cap X_\emptyset^1$. For any prediction $\yhat_1$ of the learner, pick $Y_1=1-\yhat_1$ and $X_1 = X_\emptyset^{Y_t}$. Go down along the edge labeled by $Y_t$ and continue the same process for the child there. By definition of the tree and $\ldimu$, the sequence picked by the adversary is realizable. Thus, we the adversary can force at least $L$ mistakes. Since the learner was arbitrary, $\optmistake \geq L$.

        We now proceed to prove the upper bound. We follow the prediction strategy outlined in Algorithm~\ref{alg: online learner}. Consider a round $t$ where the learner makes a mistake and consider $\xi_\tau=((x_0,x_1), Y_t)$ as defined in the algorithm. Then by definition of $f$, $\eta(\xi_1,\cdots, \xi_{\tau-1}, (x_0,x_1)) = \yhat_t \neq Y_t$. This means the orientation learner makes a mistake on all its inputs in a game defined by the sequence $\xi_1,\cdots, \xi_\tau$. Thus, the number of mistakes that the learner makes is at most the number of mistakes that the orientation learner makes. However, by Theorem~\ref{thm: optimal mistake orientation game}, we know there is a orientation learner, i.e., $\mathrm{SOA_{OG}}$, that makes at most $L$ mistakes. Thus, $\optmistake \leq L$. The only thing we need to address is that $x_{1-Y_t}$ exists. We know $x_{Y_t}=X_t \in P_{Y_t}$, thus it must be the case $\exists x_{1-Y_t}$ such that $f(x_0,x_1)=1-Y_t$ as otherwise the algorithm would have predicted $Y_t$ by the definition of $\yhat_t$.
    \end{proof}
\end{theorem}

\section{Multiclass robust online learning}
In the previous section, we only studied binary classes. In this section, we state similar results for a general label space $\Y$ potentially with an infinite size (e.g., $\Y=\mathbb{N})$. We use similar techniques to show our results. To do so, we define a multiclass orientation game where at each step $t$, in addition to presenting $(x_t^0, x_t^1)$, the adversary also presents two labels $y_t^0 \neq y_t^1$.

\begin{definition}[Multiclass Orientation Game]
    \label{def: multiclass orientation game}
    Orientation game is an iterative game between an adversary and a learner such that:
    
    At each round $t=1,2,\cdots$:
    \begin{enumerate}
        \item[] 
        \begin{enumerate}
            \item The adversary selects $(X_t^0, X_t^1) \in \X^2_\U$ and $Y_t^0, Y_t^1\in \Y$ s.t. $Y_t^0 \neq Y_t^1$ and reveals them to the learner.
            \item The learner predicts $\yhat_t \in \Y$.
            \item The adversary selects $i_t \in \{0,1\}$ and reveals $(X_t^{i_t}, Y_t^{i_t})$ to the learner.
            \item The learner incurs a loss of $\indict[\yhat_t \neq Y_t^{i_t}]$.
        \end{enumerate}
    \end{enumerate}
\end{definition}

We say the problem is realizable if the sequence $X_1^{i_1}, Y_1^{i_1}, \cdots, X_t^{i_t}, Y_t^{i_t}$ is realizable for each finite $t$. We now define a multiclass version of our dimension and again show that it characterizes learnability. 

A multiclass $\U$-adversarial Littlestone tree is a collection of nodes as follows:
$$\{(x^0_\mathbf{u}, x^1_\mathbf{u}, y_{\mathbf{u}}^0, y_{\mathbf{u}}^1)\in \X_\U^2 \times \Y_{\neq}^2:\mathbf{u}\in\{0,1\}^k, 0\leq k < d\},$$

where $\Y^2_{\neq} :=\{(y_1,y_2)\in \Y^2:y_1 \neq y_2\}$. We say such a tree is shattered by $\H$ if for each $\mathbf{u}\in \{0,1\}^d$ and for each $0 \leq k < d$, there exists $h\in\H$ such that for all $0 \leq i \leq k$, $h(z)=y_{\mathbf{u}_{\leq i}}^{u_{i+1}}$ for all $z\in \U(x_{\mathbf{u}_{\leq i}}^{u_{i+1}})$, i.e., $l_\U(h, (x_{\mathbf{u}_{\leq i}}^{u_{i+1}}, y_{\mathbf{u}_{\leq i}}^{u_{i+1}}))=0$. The multiclass $\U$-adversarial Littlestone dimension of $\H$ is similarly defined as the maximum depth of a tree that is shattered by $\H$. Here we abuse notation and also denote this dimension by $\ldimu(\H)$.

We now present our results for the multiclass learning problems. The proofs for these results are similar to those in section~\ref{sec: realizable binary}. Thus, we only give sketch of the proof and outline where they would differ from the binary case.
\begin{theorem}
    \label{thm: multiclass orientation}
    For any $\H$ with finite $L=\ldimu(\H)$ the optimal mistake bound achievable in the realizable multiclass orientation game equals $L$.
    \begin{proof}[proof sketch]
        The proof for lower bound is again by an adversary that plays according a shattered tree and for each prediction $\yhat_t=Y_t^y$ of the learner set the true label to $Y_t^{1-y}$ and follow the respective edge. The definition of the dimension then would ensure the adversary can continue for $\ldimu(\H)$ rounds.

        The upper bound is achieved by an adapted version of $\mathrm{SOA_{OG}}$ that predicts $\yhat_t = Y_t^y$ for $y\in \argmax_{y\in \{0,1\}} \ldimu(\mathcal{V}_t^{X_t^{y},Y_t^y})$ where $\mathcal{V}_t^{X_t^{y},Y_t^y}$ is the set of all hypotheses in the current version space that robustly label $X_t^{y}$ with $Y_t^y$. Again, the idea is to ensure every time the learner makes a mistake, the dimension of the version space reduces and since it will remain nonnegative, the number of times it decreases and thus the number of mistakes is bounded by $\ldimu(\H)$.
    \end{proof}
\end{theorem}

\begin{theorem}
    Any $\H \subseteq \Y^\X$ with $\ldimu(\H) = L < \infty$ is realizable robust online learnable with optimal mistake bound $L$.
    \begin{proof}
        The lower bound again simply follows from the definition of the dimension. The upper bound can be achieved by a modified version of the learner for the binary case. Here the learner predicts $\yhat_t=y$ if 
        $$\exists y\in \Y, \exists x_y\in P_y, {\forall y'\in \Y\backslash\{y\}}\forall x_{y'}\in P_{y'}: f(x_y,x_{y'}, y, y')=y.$$
        That is, the learner compares each candidate label against all other labels in the label space. Then by the same logic as the binary case, if the algorithm makes a mistake, then it must be that there is a $\xi=(x,x',y,y')$ that the orientation learner has made a mistake on. Thus, the number of mistakes is bounded by the number of mistakes an orientation learner makes which we can upper bound by $\ldimu(\H)$ by Theorem~\ref{thm: multiclass orientation}.
    \end{proof}
\end{theorem}

\section{Regret bounds for agnostic robust online learning}
    \label{sec: agnostic}

    In the agnostic setting, we do not assume the sequence is realizable by the class $\H$. However, instead of minimizing the number of mistakes, we minimize the regret of the algorithm (Equation~\ref{def: reg}). Moreover, we allow the learner to be randomized, in which case we are interested in the expected regret.

    To derive regret upper bounds in the agnostic setting, we use the technique of \citet{hanneke2023multiclass} where we compress the input sequence to the maximally realizable subsequence of the input sequence. 
    \begin{theorem}
    \label{thm: agn regret upper bound}
        The optimal expected regret on any sequence $S=Z_1, X_1, Y_1,\cdots, Z_T,X_T,Y_T$ is at most $\mathcal{O}(\sqrt{T\ldimu(\H)\log(T)})$.
        \begin{proof}
            For any $J\subseteq [T]$, let $S_J:=\{Z_t,X_t,Y_t:t\in J\}$ be the subsequence of $S$ on indices in $J$. Let $\A(z;S_J)$ be the realizable learner that has only been trained using the subsequence $S_J$ (since the sequence might not be realizable, we assume the learner predicts 0 if the version space becomes empty). Here we assume the realizable learner is lazy and only updates its history (version space) when it makes a mistake, which is enough for achieving the same upper bound in the realizable setting. This is because each mistake still reduces the dimension of the version space by at least one, so the number of mistakes will still be bounded by the dimension of the hypothesis class. Let $h^*\in \H$ be the hypothesis that achieves $\min_{h\in \H} \sum_{t=1}^T l_\U(h, (X_t,Y_t))$. Let $R=\{t\in[T]:l_\U(h^*, (X_t,Y_t))=0\}$. Then $S_R$ is realizable by $h^*$ and thus $\H$. Assume we play the robust online learning game on $S_R$, then by the guarantees of the (lazy) realizable learner, it makes $L\leq\ldimu(\H)$ mistakes. Let $J$ be the rounds on which the lazy learner makes a mistake. Consider the prediction strategy on the whole sequence that at each round $t$ predicts $\A_J(Z_t) = \A(Z_t;S_{J_{<t}})$ with $J_{<t}=\{j\in J: j<t\}$. Notice that this means on rounds in $[T]\backslash R$, we do not perform any updates on the learner, and on $R$ we only make updates when we make a mistake. Then by this definition, the mistakes on $S_R$ are exactly the subset $J$. Thus,
            \begin{align*}
            \sum_{t=1}^T 1[\A(Z_t;S_{J_{<t}}) \neq Y_t] &= \sum_{t\in R}1[\A(Z_t;S_{J_{<t}}) \neq Y_t] + \sum_{t\in [T]\backslash R}1[\A(Z_t;S_{J_{<t}}) \neq Y_t] \\
            &\leq L + T-|R|\\
            &\leq \ldimu(\H) + \sum_{t=1}^T l_\U(h^*, (X_t,Y_t))
            \end{align*}

            However, we do not know the sequence and thus $h^*, R$, and $J$ in advance, so we need to consider all possibilities of $J$, that is all subsets of size at most $\ldimu(\H)$ of $[T]$ whose count is at most $T^{\ldimu(\H)}$ for large enough $T$. As our final algorithm, $\A_{agn}$, we run the prediction with expert advice algorithm of Theorem~\ref{thm: pred with advice regret} over $\{A_J:J\subseteq T, |J|\leq \ldimu(\H)\}$, which gives us the following guarantee:

            \begin{align*}
            \sum_{t=1}^T \mathbb{E} [1[\A_{agn}(Z_t) \neq Y_t]] &\leq \min_{J\in \binom{[T]}{\leq \ldimu(\H)}} 1[\A_J(Z_t)\neq Y_t] + \mathcal{O}(\sqrt{T\log(T^{\ldimu(\H)})})\\
            &\leq \ldimu(\H) + \sum_{t=1}^T l_\U(h^*, (X_t,Y_t)) + \mathcal{O}(\sqrt{T\ldimu(\H)\log(T)})
            \end{align*}

            Thus, the expected regret is bounded by $\mathcal{O}(\sqrt{T\ldimu(\H)\log(T)})$.
        \end{proof}
    \end{theorem}

    \begin{lemma}
        For any class $\H$ and any agnostic learner, there exists a sequence on which the optimal expected regret is at least $\Omega(\sqrt{T\ldimu(\H)})$.
        \begin{proof}
            The proof follows similar ideas to those of the proof for classical setting by \citet{Ben-DavidPS09} (Lemma 14).
        \end{proof}
    \end{lemma}

\section{Robust online learning with uncertain perturbation sets}
\label{sec: uncertain perturbations}
In previous sections we assumed the learner knows $\U$. In this section we weaken this assumption so that the learner does not fully know $\U$ but has some prior knowledge about it. Formally, we assume the true perturbation function $\U^*$ belongs to some \textbf{finite} $\mathcal{G} \subseteq (2^\X)^\X$, which is known to the learner. Here we study this scenario for realizable learning in finite number of rounds $T$. The overall idea is to run the algorithms developed in the previous sections for all possible perturbation functions in $\mathcal{G}$ and then consider them as experts that we can use to make predictions. We are now ready to state our results.

\begin{theorem}
    \label{thm: uncertain, realizable 1}
    For any hypothesis class $\H$ with $L^*:=\max_{\U \in \mathcal{G}} \ldimu(\H) < \infty$ and for any input sequence that is $\U^*-$robustly realizable by $\H$ for some $\U^*\in\mathcal{G}$, the optimal expected number of mistakes is upper bounded by $L^* + \mathcal{O}(\sqrt{L^*\log(|\mathcal{G}|)} + \log(|\mathcal{G}|))$.
    \begin{proof}
        For each $\U\in \mathcal{G}$, define expert $\A_\U$ that predicts according to $\mathrm{SOA_{OG}}$ assuming $\U$ is the true perturbation function. It is possible that for some of the functions, the version space becomes empty. In that case, the expert will predict 1. By Theorem~\ref{thm: realizable optimal mistake bound}, we know $\A_{\U^*}$ will make at most $\mathrm{L}_{\U^*}(\H) \leq L^*$ mistakes. The results follow by using the algorithm for prediction with advice with $N=|\mathcal{G}|$ experts guaranteed by Theorem~\ref{thm: pred with advice known mistake bound}.
    \end{proof}
\end{theorem}

The mistake bound of Theorem~\ref{thm: uncertain, realizable 1} could be too loose in cases where $L_{\U^*}(\H)$ is much smaller than $L^*=\max_{\U\in \mathcal{G}}\ldimu(\H)$. In fact, $L^*$ could be infinite. Can we still get a bound on the number of mistakes? Our next result answers this question positively.

\begin{theorem}
    For any $\H$ and for any sequence that is $\U^*-$robustly realizable by $\H$ for some $\U^*\in \mathcal{G}$, the optimal number of mistakes is upper bounded by $(\mathrm{L}_{\U^*}(\H)+1)\log(|\mathcal{G}|)$.
    \begin{proof}
        Consider the experts $\A_\mathcal{U}$ as defined in the proof of Theorem~\ref{thm: uncertain, realizable 1}. The learner operates in multiple phases. In each phase the learner starts with the set of all experts and predicts with the majority label at each round. After each round the learner removes the experts that made a mistake. Each phase continues until the set of experts becomes empty, after which the next phase starts with all experts. In each phase, the learner makes at most $\log(|\mathcal{G}|)$ mistakes because each mistake reduces the number of experts by half. The expert $\A_{\U^*}$ makes at most $\mathrm{L}_{\U^*}$ mistakes, thus the number of phases is at most $\mathrm{L}_{\U^*}(\H)+1$ since in that phase there will be an expert that does not make a mistake and will not get removed. Thus, the total number of mistakes is bounded by $(\mathrm{L}_{\U^*}(\H)+1)\log(|\mathcal{G}|)$.
    \end{proof}
\end{theorem}

\section{Conclusion}
In this work, we initiated the study of robust online learning. We formulate this learning problem similar to the classic online learning framework of \citet{littlestone1988learning}. We also introduce a new dimension and show it characterize robust online learnability and proved mistake and regret bounds that are controlled by our dimension. Unlike the dimension that characterize robust PAC learnability \citep{MontasserHS22}, our dimension is simple and resembles the Littlestone dimension. We also studied a more general case where the learner only knows that the perturbation function belongs to a finite class. We prove upper bounds for both realizable and agnostic case that depend logarithmically on the cardinality of the class.

We conclude the paper with some questions for future work.
\begin{itemize}
    \item Here we assumed the learner either exactly knows $\U$ or knows it belongs to a finite class. What if the class that contains $\U$ is infinite, but has some structure? What about the case where we do not know anything about $\U$ but have access to some oracle? See for example \cite{montasser2021adversarially} and \cite{lechner2023adversarially} who studied this question in the PAC setting.
    \item In our setup, the learner receives the clean input $X_t$. Can we remove or weaken this assumption and still be able to learn?
    \item We also assume full feedback about the true label. What characterizes learnability in case of partial feedback (bandit setting)? See for example \cite{daniely2013price}, \cite{raman2024multiclass}, and \cite{raman2024apple}.
    \item Our lower and upper bound had a multiplicative difference of $\sqrt{\log(T)}$. Is it possible to close this gap? The same question has been answered for the classic online learning \citep{alon2021adversarial}.
    \item Here we focused on classification tasks. Can we extend our results to regression task? Robust PAC regression has been studied by \citet{pmlr-v202-attias23a}.
\end{itemize}

\printbibliography

\newpage
\appendix

\section{Prediction with expert advice}
\label{appendix: learning with expert advice}
Prediction with expert advice is an iterative interaction between a learner and a (potentially adversarial) environment. Here the learner has access to $N$ experts who will each at every round make a binary prediction. The learner the use this information to make a prediction of its own. The learner then incurs a loss based on the true label and will also receive the loss incurred by each expert. The goal of the learner is to minimize the total loss it incurs compared to the total loss that the best expert incurs in the hindsight.

\begin{theorem}[\cite{Cesa-Bianchi_Lugosi_2006}, Theorem 2.2]
    \label{thm: pred with advice regret}
    Let $N$ be the number of experts and let $T$ be the number of rounds. Let $l_t^i\in [0,1]$ be the loss that the expert $i$ incurs at round $t$. Then there is a random algorithm whose losses $l_t$, satisfy:
    $$\mathbb{E}[\sum_{t=1}^{T}l_t] - \min_{i\in [N]} \sum_{t=1}^{T}l_t^i \leq \mathcal{O}(\sqrt{T\log(N)})$$
\end{theorem}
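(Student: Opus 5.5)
The plan is to prove the bound with the exponentially weighted average forecaster (Hedge), randomizing over experts, and analyse it with a potential argument. I will take $T$ as known, as in the agnostic setting of Section~\ref{sec: agnostic}; if it were not, the doubling trick would lose only a constant factor.

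Fix a learning rate $\eta>0$ and write $L_{i,t}=\sum_{s\le t} l_s^i$ for the cumulative loss of expert $i$, with $L_{i,0}=0$. At round $t$ the algorithm sets weights $w_{i,t}=\exp(-\eta L_{i,t-1})$. It then samples expert $I_t$ with probability $p_{i,t}=w_{i,t}/W_t$, where $W_t=\sum_i w_{i,t}$, and follows that expert's prediction, so $l_t=l_t^{I_t}$. Since $p_t$ depends only on the past, $\mathbb{E}[l_t\mid \text{past}]=\sum_i p_{i,t}l_t^i=:\bar l_t$. This holds even if the environment is adaptive, provided the losses at round $t$ do not depend on the current draw $I_t$. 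It therefore suffices to bound $\sum_t \bar l_t-\min_i L_{i,T}$ pathwise, for every loss sequence, and then take expectations.

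The potential argument sandwiches $\ln(W_{T+1}/W_1)$. For the lower bound, $W_1=N$ and $W_{T+1}\ge \exp(-\eta\min_i L_{i,T})$, so $\ln(W_{T+1}/W_1)\ge -\eta\min_i L_{i,T}-\ln N$. For the upper bound, each factor satisfies $W_{t+1}/W_t=\sum_i p_{i,t}e^{-\eta l_t^i}=\mathbb{E}_{i\sim p_t}[e^{-\eta l_t^i}]$. Because $l_t^i\in[0,1]$, Hoeffding's lemma gives $\ln \mathbb{E}[e^{-\eta X}]\le -\eta\mathbb{E}X+\eta^2/8$ for any random variable $X\in[0,1]$. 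Summing over $t$ yields $\ln(W_{T+1}/W_1)\le -\eta\sum_t\bar l_t+\eta^2T/8$.

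Combining the two bounds gives $\sum_t\bar l_t-\min_i L_{i,T}\le \frac{\ln N}{\eta}+\frac{\eta T}{8}$. Choosing $\eta=\sqrt{8\ln N/T}$ makes this $\sqrt{(T\ln N)/2}=\mathcal{O}(\sqrt{T\log N})$, which is the claim.

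The only step that needs real care is the per-round inequality from Hoeffding's lemma. I would prove it directly: the log-moment generating function $\psi(\eta)=\ln\mathbb{E}e^{-\eta X}$ has $\psi(0)=0$ and $\psi'(0)=-\mathbb{E}X$. Its second derivative is the variance of $X$ under an exponentially tilted distribution still supported on $[0,1]$, and that variance is at most $1/4$. Taylor's theorem then gives the $\eta^2/8$ term. Everything else is bookkeeping.
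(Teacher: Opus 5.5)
Your proof is correct and is essentially the argument behind the cited result. The paper gives no proof of its own and defers to \cite{Cesa-Bianchi_Lugosi_2006}, where Theorem~2.2 is proved the same way: bound $\ln(W_{T+1}/W_1)$ from both sides using Hoeffding's lemma to get $\frac{\ln N}{\eta}+\frac{\eta T}{8}$, which is $\sqrt{(T\ln N)/2}$ at $\eta=\sqrt{8\ln N/T}$. The only cosmetic difference is that the book states the bound for a deterministic forecaster under a convex loss, whereas you randomize over experts and use linearity of expectation. Your pathwise bound also correctly covers the adaptive experts the paper relies on.
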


\begin{theorem}[\cite{Cesa-Bianchi_Lugosi_2006}, Corollary 2.4]
    \label{thm: pred with advice known mistake bound}
    Consider the setting of Theorem~\ref{thm: pred with advice regret}. If we further assume that $\min_{i\in [N]} \sum_{t=1}^{T}l_t^i \leq L^*$, then there is a random algorithm whose losses $l_t$, satisfy:
    $$\mathbb{E}[\sum_{t=1}^{T}l_t] \leq L^* + \mathcal{O}(\sqrt{L^*\log(N)} + \log(N))$$
\end{theorem}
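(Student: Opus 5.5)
The approach is the exponentially weighted average forecaster with a learning rate tuned to the known bound $L^*$, analysed through the usual potential $W_t=\sum_{i\in[N]} w_{i,t}$.

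\textbf{Algorithm.} Let $L_{i,t}=\sum_{s\le t} l_s^i$ denote the cumulative loss of expert $i$. Set $w_{i,t}=e^{-\eta L_{i,t-1}}$ and $p_{i,t}=w_{i,t}/W_t$. At round $t$ the learner follows expert $i$ with probability $p_{i,t}$. Its expected loss is then $\mathbb{E}[l_t]=\sum_i p_{i,t}l_t^i$ (for binary predictions with 0/1 loss, following a random expert gives exactly this). The parameter $\eta>0$ is fixed later as a function of $L^*$ and $N$.

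\textbf{Potential argument.}
\begin{enumerate}
\item \emph{Lower bound.} Since $W_1=N$ and $W_{T+1}\ge e^{-\eta\min_i L_{i,T}}\ge e^{-\eta L^*}$, we get $\ln(W_{T+1}/W_1)\ge -\eta L^*-\ln N$.
\item \emph{Per-round upper bound.} We have $W_{t+1}/W_t=\sum_i p_{i,t}e^{-\eta l_t^i}$. By convexity, $e^{-\eta x}\le 1-(1-e^{-\eta})x$ for $x\in[0,1]$. Combined with $\ln(1-u)\le -u$, this gives $\ln(W_{t+1}/W_t)\le -(1-e^{-\eta})\,\mathbb{E}[l_t]$.
\item \emph{Summation.} Summing over $t$ and comparing with the lower bound yields
$$\mathbb{E}\Big[\sum_t l_t\Big]\le \frac{\eta L^*+\ln N}{1-e^{-\eta}}.$$
\end{enumerate}

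\textbf{Tuning.} Choose $\eta=\ln\bigl(1+\sqrt{2\ln N/L^*}\bigr)$ and write $a=\sqrt{2\ln N/L^*}$.
\begin{itemize}
\item Then $1-e^{-\eta}=a/(1+a)$, so the bound becomes $(1+a)(\eta L^*+\ln N)/a$.
\item Use $\ln(1+a)\le a-a^2/2+a^3/3$, or more simply $\ln(1+a)\le a(2+a)/(2(1+a))$, to control $\eta L^*$.
\item This gives $\frac{1+a}{a}\eta L^*\le L^*(1+a/2)=L^*+\sqrt{L^*\ln N/2}$.
\item Also $\frac{1+a}{a}\ln N=\ln N+\sqrt{L^*\ln N/2}$.
\end{itemize}
Altogether $\mathbb{E}[\sum_t l_t]\le L^*+\sqrt{2L^*\ln N}+\ln N$, which is the claimed $L^*+\mathcal{O}(\sqrt{L^*\log N}+\log N)$.

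\textbf{Degenerate case.} When $L^*=0$ the formula for $\eta$ is undefined. In that case I would either replace $L^*$ by $\max(L^*,1)$, which only changes constants, or take $\eta$ large so the bound becomes $\ln N/(1-e^{-\eta})\to\ln N$.

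\textbf{Main obstacle.} The step needing care is the tuning algebra, specifically verifying the inequality $\ln(1+a)\le a(2+a)/(2(1+a))$. I would check it by comparing derivatives at $a=0$. Everything else is the standard potential argument.
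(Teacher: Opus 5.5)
Your proposal is correct and is the standard argument behind the cited result. The paper gives no proof of its own; Cesa-Bianchi and Lugosi prove $\mathbb{E}[\sum_t l_t]\le(\eta L^*+\ln N)/(1-e^{-\eta})$ by the same potential argument, then tune $\eta=\ln\bigl(1+\sqrt{2\ln N/L^*}\bigr)$ to get $L^*+\sqrt{2L^*\ln N}+\ln N$, just as you do.

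One small correction. Only your second inequality works: the cubic Taylor bound gives $(1+a)(1-a/2+a^2/3)=1+a/2-a^2/6+a^3/3$, which exceeds $1+a/2$ for $a>1/2$. Comparing derivatives at $a=0$ alone does not verify the second inequality. Instead, $g(a)=\frac{a(2+a)}{2(1+a)}-\ln(1+a)$ satisfies $g(0)=0$ and $g'(a)=\tfrac12\bigl(\tfrac{a}{1+a}\bigr)^2\ge 0$ for all $a\ge 0$.
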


Note that the algorithms in the above theorems are the same algorithm with different values for a parameter. We refer the readers to \cite{Cesa-Bianchi_Lugosi_2006} for the algorithm and the proof as we only use the algorithm as a black box. However, what's crucial here is that this guarantee holds for any set of experts. Specifically, the experts do not need to be fixed and could adapt to the adversary, which is the case for us as the experts we create use the clean inputs in the previous rounds.

\end{document}